\documentclass[a4paper]{article}
\pdfoutput=1
\usepackage{dblfloatfix}
\usepackage{caption}  
\usepackage{graphicx}
\usepackage[numbers]{natbib}
\usepackage{tikz}
\usetikzlibrary{shapes,arrows,calc,positioning}
\bibliographystyle{alpha}
\usepackage{amssymb}
\usepackage{amsmath}
\usepackage{mathtools}
\usepackage{algorithm}
\usepackage{algorithmicx}
\usepackage{algpseudocode}
\usepackage{subfigure}
\usepackage{paralist}
\usepackage{listings}
\usepackage{xcolor}
\usepackage[export]{adjustbox}
\usepackage{url}
\usepackage{csquotes}
\usepackage{color, colortbl}
\usepackage{mathdots}
\usepackage{tabularx}

\usepackage{amsthm}
\newtheorem{theorem}{Theorem}[subsection]
\newtheorem{lemma}[theorem]{Lemma}

\newtheorem{definition}{Definition}[subsection]

\newtheorem{remark}{Remark}[subsection]
\newtheorem{example}{Example}[subsection]

\definecolor{Gray}{gray}{0.9}

\lstset{language=C++,basicstyle=\ttfamily}

\begin{document}
\title{Theoretical Robopsychology: Samu Has Learned Turing Machines}
\author{Norbert~B\'atfai\\batfai.norbert@inf.unideb.hu\\Department of Information Technology\\University of Debrecen}

\maketitle

\begin{abstract}
From the point of view of a programmer, the robopsychology is a synonym for the activity is done by developers to implement their machine learning applications. This robopsychological approach raises some fundamental theoretical questions of machine learning. Our discussion of these questions is constrained to Turing machines. Alan Turing had given an algorithm (aka the Turing Machine) to describe algorithms. If it has been applied to describe itself then this brings us to Turing's notion of the universal machine. In the present paper, we investigate algorithms to write algorithms. 
From a pedagogy point of view, this way of writing programs can be considered as a combination of learning by listening and learning by doing due to it is based on applying agent technology and machine learning. As the main result we introduce the problem of learning and then we show that it cannot easily be handled in reality therefore it is reasonable to use machine learning algorithm for learning Turing machines.
\end{abstract}

\section{Introduction}

Samu is a disembodied developmental robotic experiment to develop a family chatterbot agent who will be able to talk in a natural language like humans do \cite{samu}. At this moment it is only an utopian idea of the project Samu. The practical purpose of Samu projects is to develop computational mental organs that can support software agents to acquire higher-order knowledge from their input \cite{psamu1}. The activities have been conducted during the development of such mental organs may be considered as first efforts to create on demand the Asimovian profession called robopsychology\footnote{\url{https://en.wikipedia.org/wiki/Robopsychology}} \cite{robopsyc}.

The roots of this paper lie in the two new software experiments Samu Turing \cite{SamuTuring} and Samu C. Turing \cite{SamuCTuring}. These are very simplified versions of the former habituation-sensitization \cite{DevRobBook} based 
(like for example SamuBrain \cite{SamuBrain} or SamuKnows \cite{SamuKnows}) learning projects of Samu. 
Their common feature is that they use the same COP-based Q-learning engine that the chatbot Samu does.
To be more precise the mental organs use the same code 
(to see this compare
\url{https://github.com/nbatfai/SamuLife/blob/master/SamuQl.h} 
with \url{https://github.com/nbatfai/nahshon/blob/master/ql.hpp}) 
as the chatbot does. The term \enquote{COP-based} (Consciousness Oriented Programming \cite{cop}) means that the engine predicts its future input.
The engine itself is based on the Q-learning that receives positive reinforcement if the chatbot (or a mental organ) can successfully forecast the next input of Q-learning in the actual step. In this case the previous output (the previous prediction) is the same as the actual input, for precise details see  \cite{samu} and  \cite{psamu1}). 
In the two new experiments in question the transition rules of Turing machines (TMs) have been learned as it is illustrated in Fig \ref{fig_st}. 
It should be noticed that neither these experiments nor this paper focus on the habituation-based learning because the learning agent knows the model (TM) that generates the reality.  
\begin{figure}[!h]
\centering
\includegraphics[scale=.4, frame]{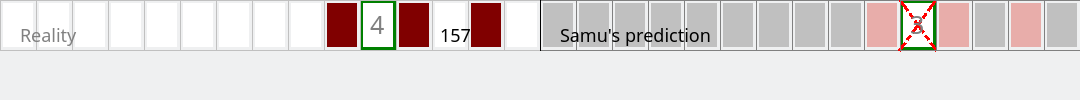}
\caption{This is a screenshot from the project Samu Turing. The reality shown in the left side is generated by the operation of a given Turing machine. The right side shows the predicted configurations of the investigated Turing machine.}
\label{fig_st}
\end{figure}
Our motivation to write this paper stems from the last paragraph of the work of Neumann on the general theory of automata \cite{neumann1} where Neumann had suggested that there is a complexity level above which the machines can reproduce themselves and even more complicated ones. 
Neumann investigated the self-reproducing automata \cite{neumann1} roughly a decade after Alan Turing had published his work on universal simulation theorem \cite{halting}.
The Turing machine is a precise form of the informal notion of the algorithm to describe algorithms. If this description algorithm has been applied to describe itself then this brings us to Turing's notion of the universal machine. 
In an intuitive sense we can say that Neumann replaced Turing's notion of simulation with the notion of reproduction. In this work we would like to replace the reproduction with the learning. To be more precise we investigate algorithms to write algorithms. For simplicity of our discussion the scope of this paper is constrained to Turing machines.
It should be noticed that we could have used other universal computing models such as the Cellular Automata. For example, the first mental organs had learned the Conway's Game of Life \cite{psamu1} (or see the YouTube video at \url{https://youtu.be/_W0Ep2HpJSQ}). But in spite of this, we chose Turing machines because they are closer to the programmers' intuition.

The structure of this paper is as follows: the next section introduces the basic notations. Then, in Sect. \ref{sect_learning} we present the results of two Samu-based developmental robotic software experiments to learn how Turing machines operate.  
Here we investigate some specific TMs. It should be noticed that some of them, such as the machines of Schult and Uhing or the Marxen and Buntrock's BB5 champion machine are famous in the field of the Rad{\'{o}} Tibor's Busy Beaver problem \cite{VitanyiBook}. It is worth noting that despite that this problem is a very interesting theoretical computer science problem we do not address it in this paper. We introduce of the learning problem and give the basic notions of this subject. Finally we present a new complexity measure called self-reproduction complexity and we show in Subsect. \ref{cc} that it is reasonable to use machine learning algorithm for learning Turing machines. The paper is closed by a short conclusion in which some possible directions for further work are pointed out.

\section{Notations and Technical Background}

Throughout both this article and our software experiments we use the definition of the Turing machine (TM) that was introduced in \cite{batfai-rt} and also used in \cite{batfai-orchmach} where the Turing machine was defined by a quadruple $T=(Q, 0, \{0,1\}, f)$ where 
$f:Q\times\{0,1\} \rightarrow Q\times\{0,1\}\times\{\leftarrow,\uparrow,\rightarrow\}$ is a partial transition function and $0 \in Q \subset \mathbb{N}$ is the starting state.  
As usual a configuration determines the actual state of the head, the position of the head and the contents of the tape.
With the notation of \cite{batfai-rt} a configuration can be written in the form $w_{before}[q>w_{after}$, where $w_{before}, w_{after} \in \{0,1\}^*$ and $q \in Q$.

In some proofs for simplicity's sake we use multitape Turing machines or the blank symbol \textvisiblespace \ on the tape (that is the tape alphabet is extended by the symbol \textvisiblespace). In addition, without limiting the generality, we may assume that halting Turing machines (with a given input) do not contain unused transition rules. The notation $T(x)<\infty$ denotes that the machine $T$ with the input $x$ halts.

\begin{definition}[configN]The word $b_N\dots b_1[q>a_0a_1\dots a_N$ over the alphabet $ \{0,1, [, >\} \cup Q$ where $a_i, b_j \in \{0,1\}$ is referred to as a configN configuration if there is a configuration  $w_{before}[q>w_{after}$ such that 
$w_{before}[q>w_{after}$ $=$
$w^,_{before}b_N\dots b_1[q>a_0a_1\dots a_Nw^,_{after}$.
\end{definition}

\begin{remark}[config$\infty$]In some cases, see for example Remark \ref{rem_configinfty}, we extend the definition of the configuration as follows $\text{\textvisiblespace}^\infty w_{before}[q>w_{after} \text{\textvisiblespace}^\infty$. In this sense a usual  configuration corresponds to a config$\infty$ configuration where $w^,_{before} = w^,_{after} = \lambda$ the empty word.
\end{remark} 

We may note that the release of the project Samu C. Turing used in Fig. \ref{fig_runninglearning} uses config4 configurations. 

\section{Learning by Listening and Doing\label{sect_learning}}

In the aforementioned projects Samu Turing and Samu C. Turing we programmed the Samu agent to work in a similar way as, for example, Professor James Harland did in his work \cite{Harland} where he observed and studied the configurations of Marxen and Buntrock's Busy Beaver champion machines \cite{MarxenBuntrock}. In our experiments the agent Samu observes (listening) the consecutive subconfigurations of a given investigated Turing machine and try to predict (doing) the next rule of the machine that will be applied. From this viewpoint this whole learning process can be seen as a way of learning by listening and doing where the listening part is the sensation of the agent and doing is the prediction of the agent. But the question may naturally be raised why should we use agent technology and machine learning algorithms to learn Turing machines? Our explicit answer is based on the following intuitive results and it will be found in Sect. \ref{cc}. 

\subsection{Some Intuitive Results\label{intuitive}}

\begin{figure}[!h]
\centering
\includegraphics[scale=.5]{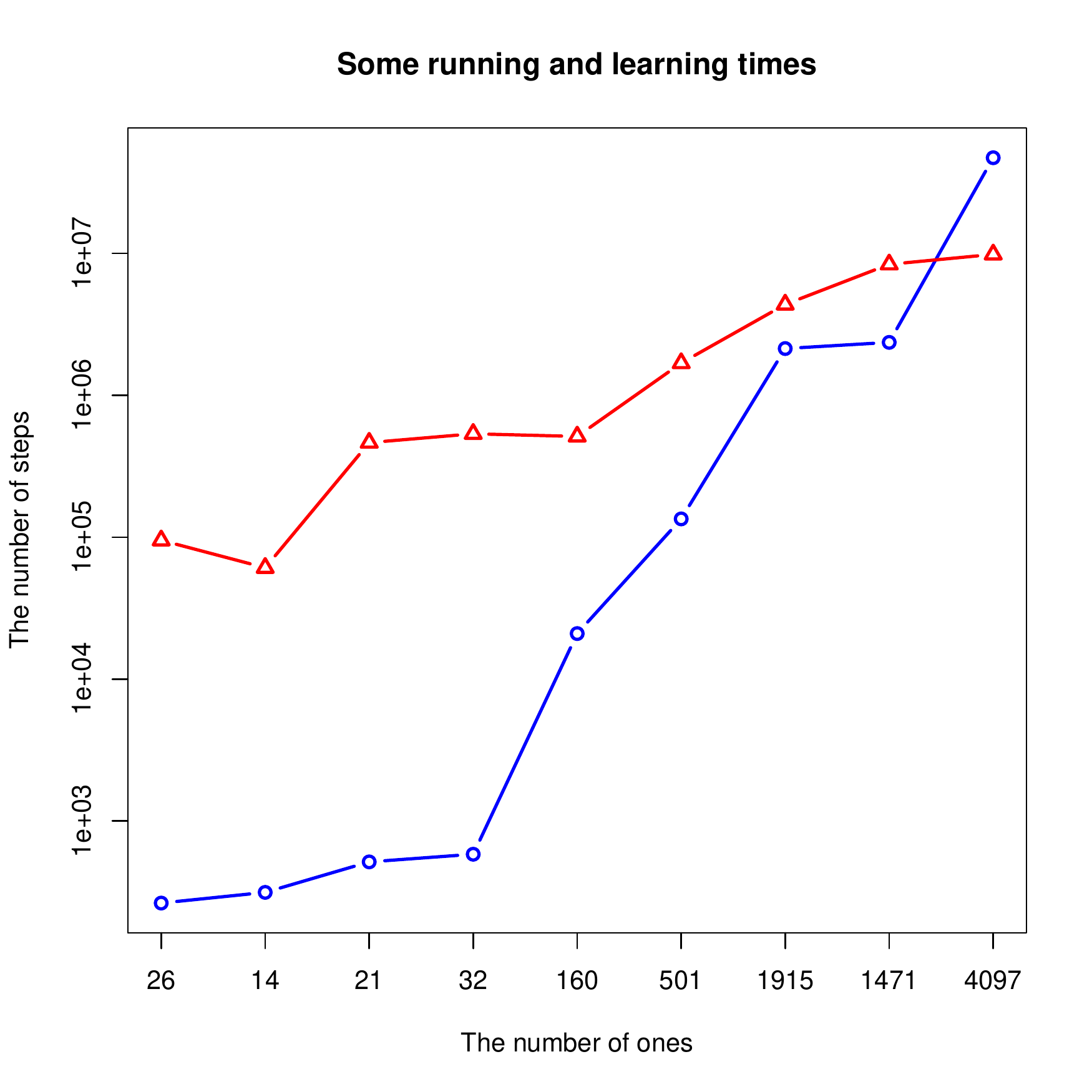}
\caption{This figure shows the usual running time (time complexity) of some given machines and the learning time of these investigated machines. The blue curve is the usual time complexities and the red one is the running times of the learning. The x-axis labeled with the number of ones printed by the Turing machines \enquote{26}, \enquote{14}, \enquote{21}, \enquote{32}, \enquote{160}, Schult (\enquote{501}), Uhing (\enquote{1915}), Uhing (\enquote{1471}) and Marxen-Buntrock (\enquote{4097}). For more precise details see https://github.com/nbatfai/SamuCTuring/releases/tag/vPaperTheorRobopsy and Table \ref{tablecc}.\label{fig_runninglearning}}
\end{figure}

Fig. \ref{fig_runninglearning} summarizes and compares some running results produced by the project Samu C. Turing.
The numbers of two kinds of running times (usual time complexity and  \enquote{learning complexity}, see the caption of the figure for details) are not directly comparable because they use different scales to compute the y-axis values. One of the two curves is computed by the number of steps of a Turing machine and the other by the number of sensory-action pairs of the reinforcement learning agent Samu C. Turing. The exact values can be found in Table \ref{tablecc}.
One of the notions of cognitive complexity defined in Subsect. \ref{cc} will be based on this intuitive \enquote{learning complexity}. In Fig. \ref{fig_runninglearning}, it seems that the growth rate of the learning time is related to the running time. 
It is worth to compare this with Fig. \ref{fig_selfreprocomp} where the growth rate of an another (the \enquote{self-reproducing}) complexity has already been separated from the running time. 

\subsection{The Basic Notions of the Subject}

From the observations of the two experiments above, we can build the abstract model of learning that is referred to as the learning problem. The learning problem of learning TMs is divided into two parts. The first is a simulation of the TM to be learned. The second is the actual learning problem itself. Fig. \ref{fig_learning} shows the schematic of the learning problem where the UTM $R$ takes the description of the machine $T$ and an $x$ input of $T$. Then $R$ has collected the configurations of $T$ whilst it is simulating $T$ with $x$. After the simulation $S$ takes the collected configurations and it must try to figure out what TM was actually simulating.

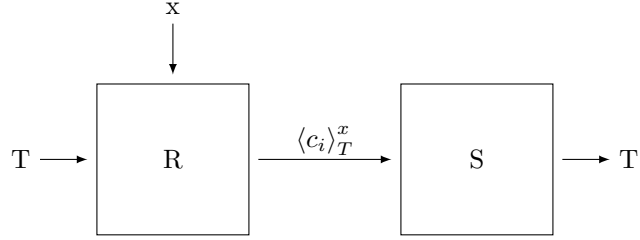
\begin{figure}[!h]
\centering
\scalebox{1}{\begin{tikzpicture}

\draw  (-4,3.5) rectangle (-2,1.5);
\draw  (0,3.5) rectangle (2,1.5);
\node (v2) at (-5,2.5) {T};
\node (v1) at (-4,2.5) {};
\node (v4) at (-2,2.5) {};
\node (v3) at (0,2.5) {};
\node (v6) at (2,2.5) {};
\node (v5) at (3,2.5) {T};
\draw [-latex] (v2) edge (v1);
\draw [-latex] (v4) edge (v3);
\draw [-latex] (v6) edge (v5);
\node (v7) at (-3,3.5) {};
\node (v8) at (-3,4.5) {x};
\draw [-latex] (v8) edge (v7);
\node at (-3,2.5) {R};
\node at (1,2.5) {S};
\node at (-1,2.75) {${\langle c_i\rangle}_T^x$};
\node at (3,2.5) {};
\end{tikzpicture}}
\caption{This figure shows the schematic of the learning problem. The universal machine $R$ takes two input parameters the description $T$ of a TM and the input $x$ of the machine $T$. The machine $R$ computes the sequence ${\langle c_i\rangle}_T^x$ of configurations occurred during the execution of the machine $T$ with its input $x$. Then the learning machine $S$ takes this sequence and finally $S$ has to figure out from this input sequence what was actually simulated by the machine $R$.\label{fig_learning}}
\end{figure}

\subsubsection{The Running Problem\label{rp}}

It is obvious that the running problem trivially contains the halting problem. Therefore we may notice that similar undecidable statements can be made for this case as well but in this paper we only focus on halting machines.  

\begin{lemma}\label{lemma}
Apart from the trivial case of the empty tapes, the transition rule between two consecutive configurations $c_i$ and $c_{i+1}$ is 
uniquely determined 
by the configurations $c_i$ and $c_{i+1}$.
\end{lemma}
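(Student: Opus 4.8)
The plan is to show that each of the three components of the applied rule $f(q,a)=(q',a',d)$ can be read directly off the pair $(c_i,c_{i+1})$, so that no two distinct rules can mediate the same transition. I would begin by parsing $c_i$ in the form $w_{before}[q>w_{after}$: the symbol $q$ immediately to the right of $[$ gives the current state, and the first symbol $a$ of $w_{after}$ (or the blank \textvisiblespace\ if $w_{after}=\lambda$, reading into blank territory under the config$\infty$ convention) gives the scanned symbol. Thus the left-hand side $(q,a)$ of the rule is fixed by $c_i$ alone. Parsing $c_{i+1}=w'_{before}[q'>w'_{after}$ likewise exposes the new state $q'$, which is one component of the right-hand side.

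It remains to recover the written symbol $a'$ and the direction $d$ by comparing the two configurations, and the key observation is that $d$ is determined by how the head position shifts, i.e.\ by the change in the length of the before-part. I would argue by a short case analysis: if $|w'_{before}|=|w_{before}|+1$ then the head advanced, so $d=\rightarrow$ and $a'$ is the symbol now sitting immediately to the left of the head in $c_{i+1}$ (the last letter of $w'_{before}$); if $|w'_{before}|=|w_{before}|$ then $d=\uparrow$ and $a'$ is the symbol under the head, namely the first letter of $w'_{after}$; and if $|w'_{before}|=|w_{before}|-1$ then $d=\leftarrow$ and $a'$ is the cell the head has just vacated, namely the second letter of $w'_{after}$. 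Since a single step changes the before-length by at most one and these three cases are mutually exclusive, each possibility pins down both $a'$ and $d$ unambiguously. Hence $(q',a',d)$, and with it the whole rule, is uniquely reconstructed; equivalently, any two rules mediating the same transition $c_i\to c_{i+1}$ would have to agree on $q'$, on the head-shift, and on the single altered cell, and therefore coincide.

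The step I expect to be the main obstacle is the tape-boundary bookkeeping, which is exactly where the stated exception for empty tapes enters. When the head scans into an all-blank region --- formally when both $w_{before}$ and $w_{after}$ are empty (or consist only of \textvisiblespace) and the rule writes a blank --- a rightward and a leftward move both leave the all-blank config$\infty$ tape unchanged once the absorbed boundary blanks are trimmed, so the length comparison above cannot separate $d=\rightarrow$ from $d=\leftarrow$. This degenerate situation is precisely the trivial case excluded in the hypothesis. I would therefore stipulate at the outset that at least one of $w_{before},w_{after}$ is non-empty (equivalently, the head scans or abuts a genuine $0/1$ cell), check that under this assumption a boundary blank on the side into which the head moves can no longer be silently absorbed, and then carry the case analysis through with the blank symbol \textvisiblespace\ treated on exactly the same footing as $0$ and $1$.
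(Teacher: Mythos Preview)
Your argument is sound and reaches the same conclusion as the paper, but the organisation is genuinely different. The paper does not try to \emph{read off} the rule; instead it writes $c_i=Ll[q>rR$ with $L,R\in\{0,1,\text{\textvisiblespace}\}^\infty$, assumes two rules $(q,r)\to(q_1,w_1,d_1)$ and $(q,r)\to(q_2,w_2,d_2)$ both mediate the step, and then runs a brute-force case split on the pair $(d_1,d_2)$, comparing the resulting words on each side of the marker. The six subcases collapse either to $q_1=q_2,\ w_1=w_2$ immediately (when $d_1=d_2$) or to the degenerate constraint $Ll=L$, $w_1R=lw_2R$, etc., which in the config$\infty$ convention forces $L,R=\text{\textvisiblespace}^\infty$ and $l,w_1,w_2=\text{\textvisiblespace}$. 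Your approach short-circuits this by first pinning down $d$ via the head-position shift and only then recovering $a'$; this is cleaner and yields an explicit inversion procedure rather than a uniqueness-by-exhaustion argument.

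Two small points where your write-up would need tightening to match the paper's setting. First, the paper works with left- and right-infinite words $L,R$, so there is no finite $|w_{before}|$ to compare; your length comparison is really a comparison of the marker's position relative to the non-blank content, and you should phrase it that way. Second, your description of the degenerate case is slightly narrower than what the paper's case analysis actually produces: the ambiguity condition the paper derives constrains $l,w_1,w_2,L,R$ but not the read symbol $r$, so it also covers the situation where $c_i$ carries a single non-blank cell under the head and the rule overwrites it with a blank, leaving $c_{i+1}$ entirely blank. Your stipulation that ``at least one of $w_{before},w_{after}$ is non-empty'' in $c_i$ does not exclude this, so the boundary check in your final paragraph should really be keyed to $c_{i+1}$ (or to both configurations) rather than to $c_i$ alone.
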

\begin{proof}
Suppose that there are two transition rules 
$(q, r) \to (q_1, w_1, d_1)$ and $(q, r) \to (q_2, w_2, d_2)$ where 
$q, q_1, q_2 \in Q$, 
$r, w_1, w_2 \in \{0, 1, \text{\textvisiblespace}\}$, 
$d_1, d_2 \in \{\leftarrow,\uparrow,\rightarrow\}$
and then we show that $q_1 = q_2$, $w_1 = w_2$ and $d_1 = d_2$.

Let $c_i = Ll[q>rR$ where $l \in \{0, 1, \text{\textvisiblespace}\}$, $L, R \in \{0, 1, \text{\textvisiblespace}\}^\infty$, 

Then the following cases are possible

\begin{equation*}
  c_{i+1} = Ll[q_1>w_1R, (d_1 = \uparrow)\begin{cases*}
    &  $ = Ll[q_2>w_2R$, $(d_2 = \uparrow)$ $\Leftrightarrow$ ($q_1 = q_2$, \\
    & $\quad$ $w_1 = w_2$) 
    \\
%
    &  $ = L[q_2>lw_2R$, $(\gets)$ $\Leftrightarrow$ ($q_1 = q_2$,\\ 
    & $\quad$ 
    $Ll = L$, $w_1R = lw_2R$ that is, iff \\
    & $\quad$ $l, w_1, w_2 = \text{\textvisiblespace}$ and $R, L = \text{\textvisiblespace}^\infty$)\\
    &  $ = Llw_2[q_2>R$, $(\to)$ $\Leftrightarrow$ ($q_1 = q_2$,\\ 
    & $\quad$ 
     $Ll = Llw_2$, $w_1R = R$ that is, iff \\
    & $\quad$ $l, w_1, w_2 = \text{\textvisiblespace}$ and $R, L = \text{\textvisiblespace}^\infty$)
  \end{cases*}
\end{equation*}

\begin{equation*}
  c_{i+1} = L[q_1>lw_1R, (\gets)\begin{cases*}
    &  $ = L[q_2>lw_2R$, $(\gets)$ $\Leftrightarrow$ ($q_1 = q_2$,\\ 
    & $\quad$ $w_1 = w_2$)\\
    &  $ = Llw_2[q_2>R$, $(\to)$$\Leftrightarrow$ ($q_1 = q_2$,\\ 
    & $\quad$ 
     $L = Llw_2$, $lw_1R = R$ that is, iff \\
    & $\quad$ $l, w_1, w_2 = \text{\textvisiblespace}$ and $R, L = \text{\textvisiblespace}^\infty$)
  \end{cases*}
\end{equation*}

\begin{equation*}
  c_{i+1} = Llw_1[q_1>R, (\to)\begin{cases*}
    &  $ = Llw_2[q_2>R$, $(\to)$ $\Leftrightarrow$ ($q_1 = q_2$,\\ 
    & $\quad$ $w_1 = w_2$)
  \end{cases*}
\end{equation*}

\end{proof}

\begin{remark}\label{rem_configinfty}
It is noted that we may give an even more simpler lemma and proof using the usual 
$\text{\textvisiblespace}^*$ and $ \{0, 1, \text{\textvisiblespace}\}^*$
instead of 
$\text{\textvisiblespace}^\infty$ and $ \{0, 1, \text{\textvisiblespace}\}^\infty$.
We use the latter because they are closer to the programmers' intuition.
\end{remark}

\begin{theorem}[Universal Learning]\label{theorem_learning}
There exist an universal running machine $R$ and a learning machine $S$ such that, for all halting Turing machines $T$, it holds that $S(R(T, x)) = T$.
\end{theorem}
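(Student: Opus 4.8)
The plan is to build $R$ as a universal simulator that logs its work and $S$ as a decoder that inverts the simulation step by step using Lemma~\ref{lemma}. First I would take $R$ to be a (multitape) universal machine that, given the pair $(T,x)$, simulates $T$ on $x$ while appending the current configuration to a dedicated output tape before the first step and after each subsequent step. Since the theorem restricts attention to halting machines, the simulation of $T$ on $x$ terminates, so the logged output is a \emph{finite} sequence $c_0, c_1, \dots, c_n = \langle c_i\rangle_T^x$ with $c_0$ the initial and $c_n$ the halting configuration. The existence of such an $R$ is the classical universal-machine construction augmented with a transcript tape, which the multitape conventions of Section~2 permit.

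Next I would define $S$ to reconstruct the transition function $f$ of $T$ from this transcript. Writing each configuration in the form $w_{before}[q>rw_{after}$, the machine $S$ reads off the left-hand side $(q,r)$ of the rule fired at step $i$ directly from $c_i$: here $q$ is the state recorded after the bracket and $r$ is the symbol scanned by the head. The right-hand side $(q',w',d)$ is then recovered from the pair $(c_i,c_{i+1})$, and this is exactly where Lemma~\ref{lemma} does the decisive work, since apart from the trivial all-blank case it guarantees that precisely one rule is consistent with the observed transition $c_i \to c_{i+1}$. Scanning all $n$ consecutive pairs, $S$ accumulates a set of rules and outputs the quadruple $(Q,0,\{0,1\},f)$ assembled from them. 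For correctness I would invoke the standing assumption from Section~2 that a halting run on a given input contains no unused transition rules: every rule of $f$ is then exercised at some step and hence appears as some consecutive pair of the transcript, so the set $S$ collects is exactly $f$ and $S(R(T,x)) = T$.

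The step I expect to be the main obstacle is the degenerate case that Lemma~\ref{lemma} explicitly excludes, namely a step that reads a blank, writes a blank, and moves on an otherwise empty tape: there the three moves $\leftarrow,\uparrow,\rightarrow$ all yield the same config$\infty$ string, so the direction of the head is not recoverable from the configurations alone. I would neutralize this by letting $R$, which is ours to design, record configurations together with the absolute head position (equivalently, refrain from collapsing leading and trailing blanks), so that left, right, and stay become distinguishable even over a blank tape and the exception disappears; alternatively, keeping the bare encoding, one fixes a canonical direction for such rules and recovers $T$ exactly up to that convention.
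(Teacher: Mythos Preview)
Your proposal is correct and follows essentially the same route as the paper: build $R$ as a multitape universal simulator that emits the sequence of configurations of $T$ on $x$, and let $S$ reconstruct the transition function from consecutive configuration pairs via Lemma~\ref{lemma}. If anything, your version is more explicit than the paper's sketch, since you spell out the reliance on the no-unused-rules assumption and propose a concrete fix (logging the absolute head position) for the all-blank degenerate case that Lemma~\ref{lemma} excludes, whereas the paper simply records bounded configurations with $\triangleright,\triangleleft$ markers and then appeals to the lemma without further comment.
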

\begin{proof}

The proof is divided into two parts: in the first one, we modify the usual proof of Turing's universal simulation theorem (see for example the textbook \cite{RonyaiKonyv}) to produce the sequence of configurations of $T$ by the universal machine $R$. In the other part we focus the learning of $S$ by using the previous lemma.

We provide only an outline of the first part. We use a multitape TM for the implementation of $R$. Fig. \ref{fig_proof1} shows the preparation of the tapes before starting the simulation of $T$. The tapes are shown in Fig. \ref{fig_proof2} after the simulation of the i-th step of $T$.

\begin{figure}[!h]
\centering
\begin{tikzpicture}

\node (v6) at (-4,8.5) {};
\node (v5) at (3,8.5) {};
\draw  (v5) edge (v6);

\node (v8) at (-4,8) {};
\node (v7) at (3,8) {};
\draw  (v7) edge (v8);

\node at (-.5,8.25) {encoded T and x};

\node at (-.5,7.85) {$\vdots$};

\node (v2) at (-4,7.5) {};
\node (v1) at (3,7.5) {};
\draw  (v1) edge (v2);

\node (v4) at (-4,7) {};
\node (v3) at (3,7) {};
\draw  (v3) edge (v4);

\node at (-.5,7.25) {$\triangleright|q_0>x\triangleleft$};

\node (v10) at (-4,6.75) {};
\node (v9) at (3,6.75) {};
\draw  (v9) edge (v10);

\node (v12) at (-4,6.25) {};
\node (v11) at (3,6.25) {};
\draw  (v11) edge (v12);

\node at (-.5,6.5) {};

\end{tikzpicture}
\caption{This figure shows the preparation of the tapes of $R$. On the second last tape $R$ denotes the used cells with the symbols $\triangleright$ and $\triangleleft$. From the point of view of $T$ these symbols are interpreted as the blank symbol \textvisiblespace \ on the tape. But from the point of view of $R$ they may be \enquote{interpreted} as $\text{\textvisiblespace}^\infty$ from left and from right. \label{fig_proof1}}
\end{figure}
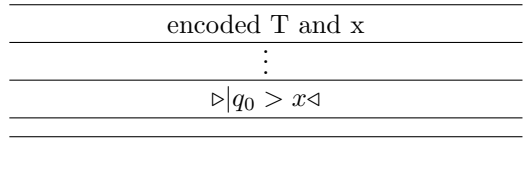

\begin{figure}[!h]
\centering
\begin{tikzpicture}

\node (v6) at (-4,8.5) {};
\node (v5) at (3,8.5) {};
\draw  (v5) edge (v6);

\node (v8) at (-4,8) {};
\node (v7) at (3,8) {};
\draw  (v7) edge (v8);

\node at (-.5,8.25) {encoded T and x};

\node at (-.5,7.85) {$\vdots$};

\node (v2) at (-4,7.5) {};
\node (v1) at (3,7.5) {};
\draw  (v1) edge (v2);

\node (v4) at (-4,7) {};
\node (v3) at (3,7) {};
\draw  (v3) edge (v4);

\node at (-.5,7.25) {$\triangleright c_i\triangleleft$};

\node (v10) at (-4,6.75) {};
\node (v9) at (3,6.75) {};
\draw  (v9) edge (v10);

\node (v12) at (-4,6.25) {};
\node (v11) at (3,6.25) {};
\draw  (v11) edge (v12);

\node at (-.5,6.5) {
$\text{\textvisiblespace}
\triangleright 
c_1\triangleleft\text{\textvisiblespace}
\triangleright 
c_2\triangleleft\text{\textvisiblespace}
\dots{\text{\textvisiblespace}}
\triangleright 
c_i\triangleleft\text{\textvisiblespace}
$};

\end{tikzpicture}
\caption{This figure shows that the denoted configuration $\triangleright c_i\triangleleft$ is copied (and collected) to the output tape after the simulation of the i-th step of $T$.\label{fig_proof2}}
\end{figure}
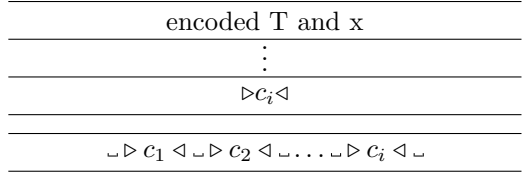

Then the theorem follows from Lemma \ref{lemma}.
\end{proof}

\subsubsection{The Learning Problem\label{lp}}

The previous theorem shows that there is no problem with learning if we use config$\infty$ (or the usual) configurations. But otherwise, as shown in the following two simple examples of config2 configurations (Example \ref{ex1} and \ref{ex2}) the applied transition rule between two consecutive configN configurations may be not uniquely determined by the configN configurations. If we use configN configurations instead of the usual or config$\infty$ configurations then the Lemma \ref{lemma} does not hold. In the next subsection a notion of complexity will be exactly based on this property.

\begin{example}\label{ex1}
Let $c_i =\text{\textvisiblespace}^\infty11111[q>11111\text{\textvisiblespace}^\infty$ 
be a config$\infty$ configuration and $c_i^, $ be a corresponded config2 configuration. Then the rules 
$(q, 1) \to (q, 1, \gets)$,
$(q, 1) \to (q, 1, \to)$,
and $(q, 1) \to (q, 1, \uparrow)$ yield the same $c_{i+1}^, = 11[q>11$ config2 configuration. 
\end{example}

\begin{example}\label{ex2}
Let $c_i =\text{\textvisiblespace}^\infty0101[q>1101\text{\textvisiblespace}^\infty$ 
be a config$\infty$ configuration and $c_i^, $ be a corresponded config2 configuration. Then the rules 
$(q, 1) \to (q, 0, \gets)$ and
$(q, 1) \to (q, 0, \to)$ yield the same $c_{i+1}^, = 10[q>10$ config2 configuration. 
\end{example}
 
\subsubsection{Cognitive Complexities\label{cc}}

As has already been mentioned in Sect. \ref{intuitive} we intuitively use the running time of the learning machines as a complexity measure that may be formulated as follows
\[cc(T,x) = min\{t_S\left({\langle c_i\rangle}_T^x\right) \vert T(x)<\infty, S(R(T, x)) = T\}\]
but it does not seem very helpful because it is probably correlated with the usual time complexity of $T$ as it is suggested by Fig. \ref{fig_runninglearning}. The next type of complexity tells what is the first finite $N$ for which Lemma \ref{lemma} holds with using the configurations $configN$. To be more precise, it is defined as 
\[cc*(T,x) = min\{N \vert T(x)<\infty, S(R(T, x)) = T \text{\ and for configN the lemma \ref{lemma} holds}\}\]
that has shown different behavior than the previous one as it can be seen in Fig. \ref{fig_selfreprocomp} 
The growth rate of the investigated $cc*$ values not related to the number of ones rather than to the running time (see \enquote{14}, \enquote{21} and \enquote{1471}).

\begin{figure}[!h]
\centering
\includegraphics[scale=.5]{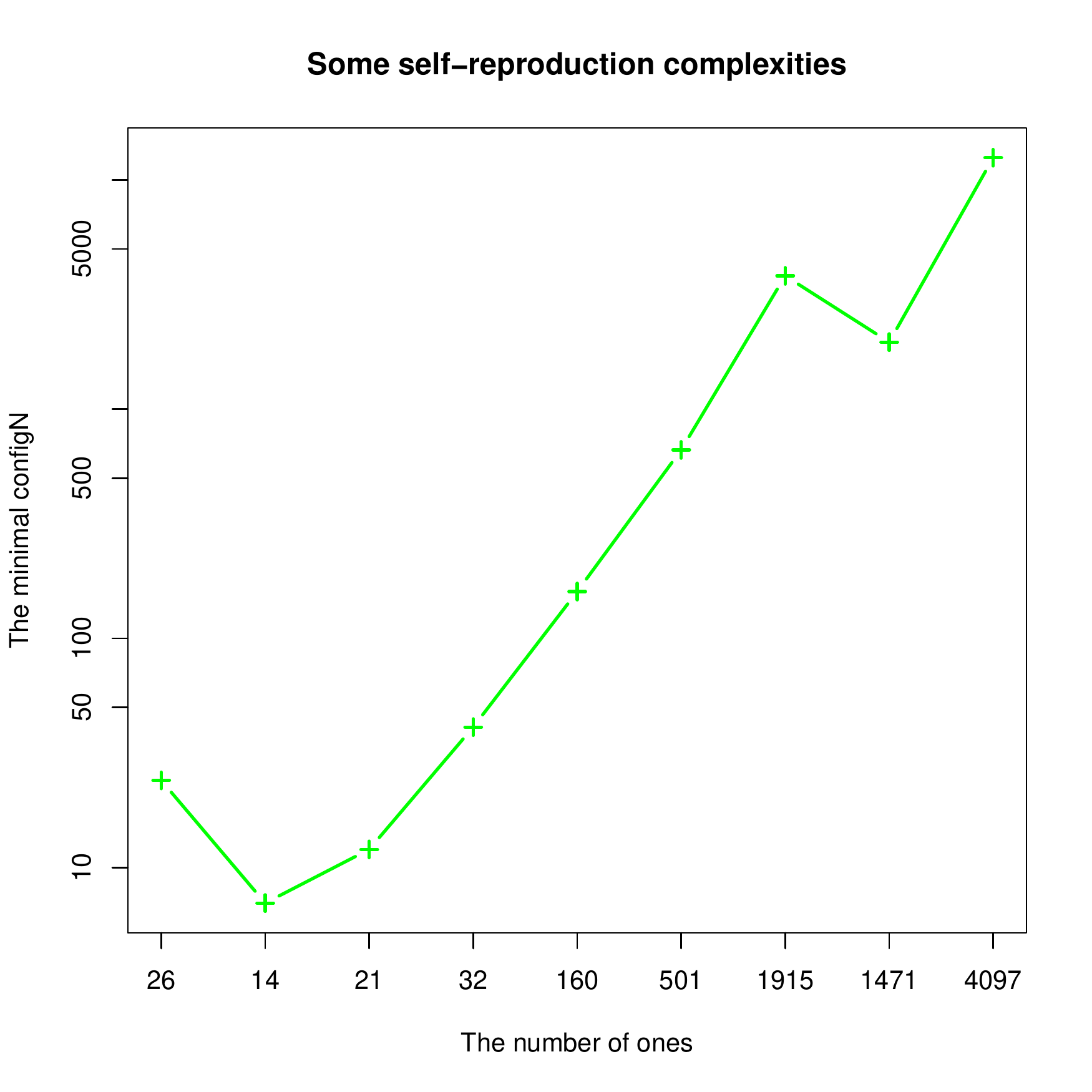}
\caption{This figure shows the $cc*$ values of machines of Fig. \ref{fig_runninglearning}. The values are computed by the version of the project Samu C. Turing that tagged by \texttt{self-reproducing\_complexity}, see  
\texttt{https://github.com/nbatfai/SamuCTuring/tree/self-reproducing\_complexity} where a manual binary search was also used to determine the last three $cc*$ values. The x-axis is exactly the same as in Fig. \ref{fig_runninglearning}.\label{fig_selfreprocomp}}
\end{figure}

The results shown in Fig. \ref{fig_selfreprocomp} also suggest that it is hopeless to handle the learning problem with the universal learning machine $S$ of Lemma \ref{lemma}. This justifies the using of agent technology (an agent observes the operation of the investigated TMs) and machine learning algorithms (such as Q-learning) to learn Turing machines instead of searching for suitable configNs for any universal learning machine $S$.

\begin{table}[!h]
\begin{center}
 \begin{tabularx}{.9\textwidth}{|r|r|X|r|} 
 \hline
 $t_T$ & 1s of $T(\lambda)$ & $cc(T, \lambda)$ & $\operatorname{cc\ast}(T, \lambda)$ \\  
 \hline
\rowcolor{Gray}
 \multicolumn{4}{|l|}{9, 0, 9, 1, 11, 2, 17, 3, 21, 4, 19, 5, 29, 6, 5, 7, 6, 8, 8}\\
  \hline
264    &  26 &    95048 &  24\\
 \hline
 \rowcolor{Gray}
 \multicolumn{4}{|l|}{9, 0, 9, 1, 11, 2, 5, 3, 20, 4, 17, 5, 24, 7, 29, 8, 15, 9, 1}\\
  \hline
314     & 14 &    60872 &  7\\
 \hline
 \rowcolor{Gray}
 \multicolumn{4}{|l|}{9, 0, 9, 1, 11, 2, 15, 3, 20, 4, 21, 5, 27, 6, 4, 7, 2, 8, 12}\\
  \hline
515     & 21 &    463558 & 12\\
 \hline
 \rowcolor{Gray}
 \multicolumn{4}{|l|}{9, 0, 21, 1, 9, 2, 24, 3, 6, 4, 3, 5, 20, 6, 17, 7, 0, 9, 15}\\
  \hline
583    &  32   &  535050 & 41\\
 \hline
 \rowcolor{Gray}
 \multicolumn{4}{|l|}{9, 0, 9, 1, 12, 2, 15, 3, 21, 4, 29, 5, 1, 7, 24, 8, 2, 9, 27}\\
  \hline
20928 &   160   & 512623 & 160\\
 \hline
 \rowcolor{Gray}
 \multicolumn{4}{|l|}{\parbox[t][][t]{.7\textwidth}{9, 0, 11, 1, 12, 2, 17, 3, 23, 4, 3, 5, 8, 6, 26, 8, 15, 9, 5 (Schult's machine)}}\\
  \hline
134467 &  501   & 1685939 &664\\
 \hline
 \rowcolor{Gray}
 \multicolumn{4}{|l|}{\parbox[t][][t]{.7\textwidth}{9, 0, 11, 1, 15, 2, 0, 3, 18, 4, 3, 6, 9, 7, 29, 8, 20, 9, 8  (Uhing's machine)}}\\
  \hline
2133492 & 1915 &  4365184 &3816\\
 \hline
 \rowcolor{Gray}
 \multicolumn{4}{|l|}{\parbox[t][][t]{.7\textwidth}{9, 0, 11, 2, 15, 3, 17, 4, 26, 5, 18, 6, 15, 7, 6, 8, 23, 9, 5 (Uhing's machine)}}\\
  \hline
2358064 & 1471 &  8368208 &1961\\ 
 \hline
 \rowcolor{Gray}
 \multicolumn{4}{|l|}{\parbox[t][][t]{.7\textwidth}{9, 0, 11, 1, 15, 2, 17, 3, 11, 4, 23, 5, 24, 6, 3, 7, 21, 9, 0 \\(Marxen and Buntrock's BB5 champion machine)}}\\
  \hline
47176870& 4097  & 9833455 &12287\\ 
 \hline
\end{tabularx}
\end{center}
\caption{This table numerically shows the $cc*$ values of the investigated machines. The combine columns show the given TM in the form of rule-index notation \cite{batfai-orchmach}.\label{tablecc}}
\end{table}

\section{Conclusion}

In this paper, we started with two developmental robotic software experiments Samu Turing \cite{SamuTuring} and Samu C. Turing \cite{SamuCTuring} to learn how Turing machines operate. This subject of the experiments itself enabled us to investigate the theoretical properties of learning. First, we have eliminated from our software experiments the developmental robotic processes (for example the habituation-sensitization parts) and then we introduced the problem of learning and some complexity measures based on it. For some cases of given TMs we also determine these complexities. 
The $\operatorname{cc\ast}$ of machines of greater sophistication cannot easily be computed by the universal learning machine $S$ of Theorem \ref{theorem_learning}. This justifies the usage of agent technology and machine learning for learning Turing machines. We have provided only an outline of the proof of Theorem \ref{theorem_learning}. To complete it may be a further theoretical computer science work. Further work of a practical robopsychological nature is also needed. For example, we are going to investigate using Samu's neural architecture \cite{samu}, Samu mental organs (like MPUs) \cite{psamu1} and deep learning to learn how TMs operate. 

To return to Neumann's train of thought mentioned in the introduction it seems to be interesting to study when the learning algorithm has been applied to write itself. Let's start from a machine $T$ that halts with $x$. It follows from Theorem \ref{theorem_learning} that 
$R(T, x) = {\langle c_i\rangle}_T^x$ and $S(R(T, x)) = T$. 
But then we can also learn this learning of $T$, that is 
$R(S, {\langle c_i\rangle}_T^x) = {\langle c_i\rangle}_S^{{\langle c_i\rangle}_T^x}$ and $S(R(S, {\langle c_i\rangle}_T^x)) = S$. And then we can learn again the learning of learning of $T$, that is, to be more precise $R(S, {\langle c_i\rangle}_S^{{\langle c_i\rangle}_T^x}) = {\langle c_i\rangle}_S^{{\langle c_i\rangle}_S^{{\langle c_i\rangle}_T^x}}$ and so on.  If we introduce the notation
\[
y^j = {\langle c_i\rangle}_S^{\iddots^{{\langle c_i\rangle}_S^{{\langle c_i\rangle}_T^x}}}
\]
then we can easily write that $cc(S, y^j) < cc(S, y^{j+1})$ because $t_S(y^j) < t_S(y^{j+1})$ but the similar relation between $\operatorname{cc\ast}(S, y^j)$ and $\operatorname{cc\ast}(S, y^{j+1})$ is an open question at this moment.

It is clear, of course, that further work of a theoretical robopsychological nature is required as well. For example, we are going to find possible relations among the time, space, Kolmogorov and cognitive complexities. We believe that this is a necessary step towards achieving the situation that has been defined as \enquote{Programs hacking programs} by Neo in the movie \enquote{The Matrix Reloaded}. In the framework of Turing machines and Busy Beaver problem this quotation has a special meaning namely that can we program a computer program not only to discover a BB machine but to build it from scratch? 

\section{Acknowledgment}
The author would like to thank his students in \enquote{High Level Programming Languages} course in the spring semester of 2015/2016 at the University of Debrecen for testing the Samu projects.
He would also like to thank the members of some AI-specific communities on Facebook, Google+ and Linkedin and especially his group called DevRob2Psy at \url{https://www.facebook.com/groups/devrob2psy/} for their interest.

\bibliography{TheorRobopsy}

\end{document}